\documentclass{comsoc2010}

\usepackage{alltt,theorem,amssymb,amsmath,latexsym}
\usepackage[ruled,linesnumbered]{algorithm2e}
\usepackage{graphicx}

\usepackage{times}
\usepackage{epic}
\usepackage{wrapfig}
\sloppy

\input{epsf}

\newcommand{\comment}[1]{}

\newenvironment{program}{\fontsize{10pt}{12pt}\begin{alltt}}
  {\end{alltt}}
 \newcommand{\keywordfont}[1]{\textrm{\textit{\textbf{#1}}}}
\newcommand{\xwhile}{\keywordfont{while}}

\newcommand{\xif}{\keywordfont{if}}

\newcommand{\xelse}{\keywordfont{else}}

\newtheorem{theorem}{Theorem}[section]
\newtheorem{definition}{Definition}
\newtheorem{observation}{Observation}
\newtheorem{proposition}{Proposition}

\newenvironment{proof}[1][Proof]{\begin{trivlist}
\item[\hskip \labelsep {\bfseries #1}]}{\end{trivlist}}

\newcommand{\zuck}{\texttt{REVERSE}}
\newcommand{\lslg}{\texttt{LSLG}}
\newcommand{\lsla}{\texttt{LSLA}}

\title{An Empirical Study of Borda Manipulation}
\author{Jessica Davies, George Katsirelos, Nina Narodytska and Toby Walsh}

\begin{document}

\begin{abstract}
We study the problem of coalitional manipulation in elections using
the unweighted Borda rule. We provide empirical evidence of the
manipulability of Borda elections in the form of two new greedy
manipulation algorithms based on intuitions from the bin-packing and
multiprocessor scheduling domains. Although we have not been able to show that
these algorithms beat existing methods in the worst-case,
our empirical evaluation shows that they significantly
outperform the existing method and are able to find
optimal manipulations in the vast majority of the
randomly generated elections that we tested. ÊThese
empirical results provide further evidence that the Borda rule
provides little defense against coalitional manipulation.
\end{abstract}

\section{Introduction}

Elections are a well established mechanism to aggregate the preferences of individuals to reach a consensus decision. New applications of voting and social choice have emerged in the field of multiagent systems and are used on a daily basis by many people in the form of polls and ratings systems on the internet.  As an election is meant to be a fair way of reaching a decision, it is important to study the weaknesses of different voting systems with respect to their vulnerability to manipulation, bribery and control. In this paper we focus on the manipulation problem, where a coalition of agents votes to ensure a desired outcome rather than reporting their true preferences.  
It is assumed that the manipulators act with full knowledge of the votes of the remaining electorate, but even so, the structure of the voting system may make it difficult to ensure that the desired candidate wins.  No practical voting system can prevent a coalition of enough manipulators from achieving their goal in all elections.  However, some mechanisms may be easier to manipulate than others.  For example, the required size of the coalition may be impractical, especially in real-world settings where obtaining the cooperation of and coordinating more than two or three people can be difficult.   Even if the number of extra votes isn't a concern, calculating the required set of manipulator votes may be computationally infeasible.\\
 
In this work we study the voting system based on using the Borda rule to aggregate the votes. The Borda rule is a positional scoring rule proposed by the French scientist Jean-Charles de Borda in 1770.  Like all positional scoring rules, each voter simply ranks the $m$ candidates according to their preference.  The votes are aggregated by adding a score of $m-k$ to a candidate for each time it appears $k^{th}$ in a vote.  The candidates with the highest aggregated score win the election.  The simplicity of this rule may have contributed to its independent reinvention on at least one other occasion; political elections in two Pacific island states use slight modifications of the Borda rule~\cite{reillyIPR02}.  It is also commonly used in competitions such as the Eurovision song contest, the election of the Most Valuable Player in major league baseball, and the Robocup competition.\\

The susceptibility of Borda elections to manipulation has been strongly suggested by recent theoretical work.   Although the problem is NP-hard if the manipulators' votes are weighted~\cite{conitzerJACM08}, in the unweighted case the complexity class is still frustratingly unknown.  Xia et al. observe that:

\begin{quote}
{\em ``The exact complexity of the problem [coalition 
manipulation with unweighted votes] 
is now known with respect to almost all of the prominent
voting rules, with the glaring exception of Borda''}
\cite{xiaEC10}
\end{quote}

A number of recent theoretical results suggest that manipulation may often be computational easy~\cite{csaaai2006,prjair07,xcec08,xcec08b}. 
Brelsford et al.~\cite{brelsfordAAAI08} showed that weighted (and unweighted) Borda manipulation has a FPTAS, which means that finding a very close to optimal manipulation can be done in polynomial time.  Along these lines, Zuckerman et al.~\cite{zuckermanSODA08} gave a simple greedy algorithm to calculate a manipulation, that in the unweighted case uses at most one more manipulator than is optimal.  In addition, even Borda himself appears to
have recognised that his rule was susceptible to manipulation, having retorted that:

\begin{quote}
{\em ``My scheme is intended only for honest men''},
quoted on page 182 of \cite{black58}
\end{quote}

More recently, strategic voting was identified in the 1991 presidential candidate elections in the Republic of Kiribati (where a variant of the Borda rule is used) 
 \cite{reillyIPR02}. This suggests that the manipulability of the Borda rule is not just a theoretical possibility but a practical reality.\\

The manipulability of voting rules has also been studied empirically~\cite{walshIJCAI09,walshECAI10}.  For example, Walsh studied the Single Transferable Vote rule, which is theoretically NP-hard to manipulate.  However, he provided ample evidence that in practise, elections using this rule are easy to manipulate~\cite{walshECAI10}. We provide further empirical evidence that the Borda rule provides little defense to manipulation, by showing that in many elections, an optimal manipulation can be found (and often verified) in polynomial time.  Our starting point is the greedy algorithm of Zuckerman et al.~\cite{zuckermanSODA08}, which decides the vote of each manipulator in turn by reversing the candidates ordered by current score.  Although this algorithm provides a guarantee that in the worst case it only uses one more manipulator than is optimal, the theoretical analysis does not extend to answer the question of how frequently it uses this extra manipulator.  Perhaps another greedy algorithm exists that finds the optimal manipulation much more frequently. If so, it could be used in conjunction with that of Zuckerman et al. to provide a verified optimal solution whenever it finds a solution using one fewer manipulator.  We introduce two new greedy algorithms, based on intuitions from the bin-packing and multiprocessor scheduling domains, and provide theoretical and empirical comparison between their performance and that of Zuckerman et al.'s greedy algorithm.  The new algorithms result in a significant improvement over Zuckerman et al.'s algorithm, allowing the optimal manipulation to be found and verified quickly on 99\% of more than 60,000 randomly generated elections.\\

The paper continues with the definitions and background in Section~\ref{sec:background}, followed in Section~\ref{sec:algs} by our new greedy algorithms. Section~\ref{sec:exper} presents the experimental results and we conclude in the last section.

\section{Background}
\label{sec:background}
In this section we introduce notation and definitions that will be used throughout the paper.\\

An \emph{election} is a pair $E = (V,m)$  where $m$ is the number of candidates.  We refer to the distinguished candidate who the manipulators want to win the election as candidate $1 \leq d \leq m$;  the other $m-1$ candidates are then the \emph{competing} candidates.  $V$ is a set of \emph{votes}, where a vote is an ordering of the candidates $v = c_1 > c_2>...> c_m$ such that $\bigcup c_j = \{1,..,m\}$.  Given a vote $v$, the \emph{score} of a candidate $i$ under the Borda rule, denoted $s(v, i)$, equals $m-k$ where $c_k = i$.   If $V$ is a set of votes, then the score of a candidate $i$ given by these votes is $s(V, i) = \Sigma_{v\in V}s(v, i)$.  Given an election $E=(V,m)$, the \emph{winners} are defined as those candidates $1\leq i \leq m$ such that $s(V, i)$ is maximal.  A \emph{manipulation} of an election $E = (V,m)$ is a set of manipulator votes $M$ such that $s(V\cup M, d) \geq s(V\cup M, i)$ for all $i \neq d$. We assume that ties are broken in favour of the manipulators. The manipulation problem is to find a manipulation such that $|M| = n$ is minimized.  Sometimes we will refer to a manipulation using $n$ votes as an \emph{n-manipulation}.\\

We define some additional notation that will be helpful in describing our greedy algorithms.  

\begin{definition}
\label{def:gap}
Given an election $E = (V,m)$, a number of manipulators $n$, the \emph{gap} of candidate $1\leq i \leq m$, is defined as $g_{E,n}(i) = s(V, d) + n(m-1) - s(V, i)$.  If the context is clear, we call the gap of candidate $i$ simply $g_i$.
\end{definition}
Intuitively, the gap of a candidate $i$ is the difference between the score the distinguished candidate receives after the manipulators have voted, and the score of $i$ before the manipulators vote. Without loss of generality, we assume that the manipulators always rank $d$ first. Note that if $g_i$ is negative for any $i$, then there is no $n$-manipulation.

\begin{definition}
\label{def:matrix}
Given an election $E = (V, m)$, an $n$-manipulation matrix $A_{E,n}$ is an $n\times m$ matrix such that all elements of column $d$ are equal to $m-1$, each row contains all numbers from $0$ to $m-1$ and column $i$ sums to at most $g_{E,n}(i)$ for all $1\leq i \leq m$.  
\end{definition}
It is easy to see that such a matrix represents an $n$-manipulation of the election, where each column represents a competing candidate, and each row corresponds to the vote of a distinct manipulator.  We will drop the parameters $E$ and $n$ and refer to matrix $A$ when the context is clear.  We use the notation $A(i)$ to denote the $i^{th}$ column of $A$, and $sum(A(i))$ is defined to be the sum of the elements in $A(i)$.

\begin{observation}
\label{obs:1}
Given an election $E = (V,m)$ and a number of manipulators $n$, if $\Sigma_{i=1}^{m-1}g_{E,n}(i) < (n/2)(m-1)(m-2)$ then there is no $n$-manipulation.
\end{observation}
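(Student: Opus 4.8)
The plan is to prove the contrapositive: I will show that the existence of any $n$-manipulation forces $\sum_{i\neq d} g_{E,n}(i) \geq (n/2)(m-1)(m-2)$, so that violating this inequality rules out every $n$-manipulation. (Here the sum is over the $m-1$ competing candidates, which is what the statement intends; note $g_{E,n}(d)=n(m-1)$ is not counted.)

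First I would establish that every $n$-manipulation gives rise to an $n$-manipulation matrix in the sense of Definition~\ref{def:matrix} (the converse direction is the ``easy to see'' remark already in the text). Given a manipulation $M=\{v_1,\dots,v_n\}$, we may assume each $v_r$ ranks $d$ first: swapping $d$ to the top of a ballot strictly increases $s(V\cup M,d)$, weakly decreases every other score, and leaves the remaining scores untouched, so the manipulation property (and the manipulator-favourable tie-breaking) is preserved. Setting $A_{r,i}=s(v_r,i)$, column $d$ is constantly $m-1$, each row is a permutation of $\{0,\dots,m-1\}$, and since $s(V\cup M,i)=s(V,i)+sum(A(i))$, the winning condition $s(V\cup M,d)\geq s(V\cup M,i)$ is exactly $sum(A(i))\leq s(V,d)+n(m-1)-s(V,i)=g_{E,n}(i)$. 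Hence $A$ is an $n$-manipulation matrix, and it suffices to show no such matrix exists under the hypothesis.

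Next I would count, in two ways, the sum of all entries of $A$ lying in the $m-1$ competing columns. Row by row: in each row the value $m-1$ is consumed by column $d$, so the entries in the competing columns form a permutation of $\{0,1,\dots,m-2\}$ and sum to $(m-1)(m-2)/2$; over all $n$ rows this total is exactly $n(m-1)(m-2)/2$. Column by column: by Definition~\ref{def:matrix}, $sum(A(i))\leq g_{E,n}(i)$ for every competing $i$, so the same total is at most $\sum_{i\neq d} g_{E,n}(i)$. Combining the two counts yields $\sum_{i\neq d} g_{E,n}(i)\geq (n/2)(m-1)(m-2)$, contradicting the hypothesis; therefore no $n$-manipulation exists.

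There is no real analytic difficulty here; the only step that needs care is the reduction in the first paragraph — verifying that the ``rank $d$ first'' normalization is genuinely without loss of generality and consistent with the tie-breaking convention, and that the manipulation/matrix correspondence is exact — together with keeping the summation index set equal to the $m-1$ competing candidates. Once that correspondence is pinned down, the double-counting argument is immediate.
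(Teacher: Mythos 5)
Your proof is correct and follows essentially the same counting argument as the paper: each manipulator ballot, with $d$ ranked first, distributes exactly $\sum_{k=0}^{m-2}k = (1/2)(m-1)(m-2)$ points among the competing candidates, so any $n$-manipulation forces $\sum_{i\neq d} g_{E,n}(i) \geq (n/2)(m-1)(m-2)$. The paper states this in one sentence as a direct consequence of Definition~\ref{def:matrix}; you merely spell out the manipulation--matrix correspondence and the double count explicitly.
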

This follows directly from Definition~\ref{def:matrix}, since each of the $n$ manipulator votes contributes a total of $\Sigma_{k=0}^{m-2}k = (1/2)(m-1)(m-2)$ score to the scores of the competing candidates. In other words, there must be enough difference between the original scores of the competing candidates and the achievable score of the distinguished candidate, otherwise an $n$-manipulation can not exist.  We call the multiset containing $n$ copies of each $0\leq k\leq m-2$ $S_n$.\\

The greedy algorithm of Zuckerman et al.~\cite{zuckermanSODA08} is shown in Figure~\ref{fig:zuck}, and from now on will be referred to as $\zuck$.  The manipulation matrix $A$ starts off empty, and is augmented row by row until enough manipulators have been added that the distinguished candidate wins.  The $\texttt{sort}$ procedure puts the distinguished candidate first, and then sorts the competing candidates in increasing order by their current score, in order to create the next manipulator's vote.\\

\noindent \emph{Example 1}.  Suppose $E = (V,5)$ where $V$ contains the votes $v_1 = 1>2>3>4>5$, $v_2 = 2>3>4>1>5$, $v_3 = 3>4>1>2>5$ and $v_4 = 4>1>2>3>5$, and $d=5$.  Then $s(V, 5) = 0$, and $s(V, i) = 10$ for all competing candidates $i < 5$.  In order for candidate $5$ to win the election, at least $4$ manipulators are required since $\Sigma_i g_{E,3}(i) = 4*(4*3 - 10) = 8$ but $(n/2)(m-1)(m-2) = 1.5*4*3 = 18$.  $\zuck$ will make the first manipulator vote $w_1 = 5 > 1 > 2 > 3> 4$ (ordering the competing candidates arbitrarily), at which point, e.g., $s(V\cup \{w_1\}, 1) = 10 + 3 = 13$.  The candidates' scores are shown in Figure~\ref{fig:ex1} after each iteration of the \keywordfont{while} loop.  Since $s(V\cup\{w_1,w_2,w_3,w_4 \}, 5) = 16$, $\zuck$ finds the optimal manipulation.

\begin{figure*}
\begin{program}
REVERSE(V,m,d)
1. A[i] \(\leftarrow \emptyset\) for all 1\(\leq\)i\(\leq\)m 
2. n \(\leftarrow\) 0
3. \xwhile max\(_i\)\(\{\)sum(A[i]) + s(V,i)\(\}\) \(>\) sum(A[d]) + s(V,d)
4.    w \(\leftarrow\) sort\(\{\)i < j\(\iff\)(sum(A[i])+s(V,i) < sum(A[j])+s(V,j) or i=d)\(\}\)
5.    A[i].push(s(w,i)) for all i
6.    n \(\leftarrow\) n + 1      
7. return A
\end{program}
\caption{\label{fig:zuck} The greedy algorithm of Zuckerman et al.~\cite{zuckermanSODA08}.}
\end{figure*}

\begin{figure*}
\centering{
\begin{tabular}{r|ccccc}
Candidate $i$ & 1 & 2 & 3 & 4 & 5\\
\hline
$s(V,i)$ & 10 & 10 & 10 & 10 & 0\\
$s(V\cup \{w_1\}, i)$ & 13 & 12 & 11 & 10 & 4\\
$s(V\cup \{w_1, w_2\}, i)$ & 13 & 13 & 13 & 13 & 8\\
$s(V\cup\{w_1,w_2, w_3\}, i)$ & 16 & 15 & 14 & 13 & 12\\
$s(V\cup\{w_1,w_2,w_3, w_4\},i)$ & 16 & 16 & 16 & 16 & 16\\
\end{tabular}}
\caption{ \label{fig:ex1} Scores given by $\zuck$, for Example 1.}
\end{figure*}

\section{Greedy Algorithms for Borda Manipulation}
\label{sec:algs}
The definition of manipulation matrix from Section~\ref{sec:background} is a useful abstraction, that suggests a connection to bin-packing or multiprocessor scheduling~\cite{coffman06}.  Intuitively, the elements of the manipulators votes, $S_n$, must be assigned to the columns of $A$ such that the sum of each column is at most $g_i$.  In the bin-packing problem, a set of objects with sizes between zero and one must be grouped into a minimum number of bins such that the sum of the objects in each bin is at most one.  So in our case, the set of objects would be $S_n$, representing the elements whose positions in manipulation matrix $A$ are initially unknown.  One of the main differences is that our matrix $A$ has a constraint on each row, that it must contain all values from $0$ to $m-1$, and it is not clear how this translates to other domains.  Luckily, Theorem~\ref{thm:1} tells us that we don't have to worry about this constraint.  If a correctly sized matrix $B$ containing $n$ elements equal to $j$ for each $0\leq j\leq m-1$ can be found such that the column sums are at most the candidate's gaps and column $d$ contains all the $m-1$'s, then it can always be converted to a manipulation matrix $A$.

\begin{theorem}
\label{thm:1}
Suppose there exists an $n \times m$ matrix $B$ such that the total number of elements in $B$ equal to $k$, for each $0\leq k\leq m-1$ is $n$. Let the sum of the elements in the $i^{th}$ column of $B$ be $g_i$. 
Then there is another $n\times m$ matrix $A$ with the same set of elements as $B$ and the same column sums, such that each row contains exactly one element equal to $k$, for each $0\leq k\leq m-1$.
\end{theorem}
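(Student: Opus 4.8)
The plan is to induct on the number of rows $n$, building $A$ one row at a time. Each row of $A$ will be a \emph{transversal} of $B$: a choice of one cell from each column whose $m$ selected entries form a permutation of $\{0,1,\dots,m-1\}$. I will perform this only by permuting entries within the columns of $B$, which automatically preserves every column sum $g_i$ and the overall multiset of entries, so in fact I will prove the slightly stronger statement that column $i$ of $A$ is a rearrangement of column $i$ of $B$.

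The base case $n=1$ needs no work: a $1\times m$ matrix containing each of $0,\dots,m-1$ once is already a row of the required form. For the inductive step with $n\ge 2$, the crux is to show $B$ has a transversal $T$. Form the bipartite graph whose two sides are the values $\{0,\dots,m-1\}$ and the columns $\{1,\dots,m\}$, with an edge between value $k$ and column $j$ whenever some cell of column $j$ of $B$ equals $k$; a transversal is exactly a perfect matching of this graph. I will verify Hall's condition: if $K$ is a set of values and $J$ is the set of columns containing at least one value of $K$, then all $n|K|$ occurrences in $B$ of values in $K$ must lie among the $n|J|$ cells of the columns in $J$, whence $|J|\ge|K|$. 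So a perfect matching, hence a transversal $T$, exists.

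Putting $T$ as the first row of $A$ and deleting from each column the single cell used by $T$ leaves an $(n-1)\times m$ array with $n-1$ cells per column in which, since $T$ used each value exactly once, each value $0,\dots,m-1$ occurs exactly $n-1$ times. By the induction hypothesis this array rearranges within its columns into an $(n-1)\times m$ matrix $A'$ each of whose rows is a permutation of $\{0,\dots,m-1\}$; stacking $T$ above $A'$ gives $A$. Every row of $A$ is then a permutation of $\{0,\dots,m-1\}$, and every column of $A$ is a rearrangement of the corresponding column of $B$, so the column sums are the $g_i$ and the total multiset of entries is unchanged. (For the application this is enough: since column $d$ of $B$ consists of $n$ copies of $m-1$, any column of $A$ of sum $n(m-1)$ with all entries at most $m-1$ is forced to be all $m-1$'s, recovering the manipulation-matrix structure.)

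The one genuine obstacle is the existence of $T$; once Hall's condition is checked by the counting argument above, the remainder is routine bookkeeping. An equivalent approach would note that the $m\times m$ matrix whose $(k,j)$ entry counts occurrences of value $k$ in column $j$ of $B$ has all line sums equal to $n$ and hence decomposes into $n$ permutation matrices (Birkhoff--von Neumann, equivalently K\"onig's edge-colouring theorem); taking each permutation matrix to be one row of $A$ yields the same conclusion, and the induction above is really a specialised proof of that decomposition.
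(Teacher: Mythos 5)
Your proof is correct and follows essentially the same route as the paper's: induction on the number of rows, extracting the first row of $A$ as a system of distinct representatives via Hall's theorem applied to the bipartite graph between values $\{0,\dots,m-1\}$ and columns, with the identical degree-counting argument verifying Hall's condition. The closing remark about the Birkhoff--von Neumann / K\"onig edge-colouring equivalence is a nice observation the paper does not make, but the core argument is the same.
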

\begin{proof}
By induction on $n$.  When $n=1$, we have $B = [b_{1,1},...,b_{1,m}]$ such that $B$ contains exactly one element of value $k$ for each $0\leq k\leq m-1$.  Therefore, just set $A=B$.  

Assume that the theorem holds for all numbers of rows less than $n$.  We prove that it also holds for $n$ rows.  Let $B$ be an $n\times m$ matrix such that the total number of elements in $B$ equal to $k$, for each $0\leq k\leq m-1$ is $n$. Let the sum of the elements in the $i^{th}$ column be $g_i$. 

Define a bipartite graph $G = (S\cup T,E)$ such that the set of left-hand vertices is $S = \{0,...,m-1\}$ (these will represent the set of values of the elements of row 1 in $A$), and the set of right-hand vertices is $T =\{1,...,m\}$ representing the columns of $B$.  $E$ contains an edge $(i,j)_k$ for each $i\in S$, $j\in T$ and $1\leq k\leq n$ such that $i = B(k,j)$. 

Note that there can be up to $n$ edges between two vertices $i$ and $j$. Since every value appears $n$ times in $B$, $|\{(k,j) : i = B(k,j)\}| = n$ and so the degree of each $i\in S$ is exactly $n$.  For each $j\in T$, the degree will also be $n$: one edge to each $i = B(k,j)$, $1\leq k\leq n$. 

Therefore, if we take any $P\subseteq S$, $n|P|$ edges leave $P$.  Since every vertex in $T$ is also of degree $n$, each vertex in the neighbourhood of $P$, $nbhd(P)$, can accommodate at most $n$ incoming edges.  Therefore, $|nbhd(P)|$ is not less than $|P|$.  Since the Hall condition holds~\cite{hall}, there is a perfect matching in $G$ that assigns each value from $0$ to $m-1$ to a position in the first row of $B$, as follows.

Let $M = \{e_1,...,e_{m}\} \subseteq E$ be the set of edges in the matching.  For each $e = (i,j)_k\in M$, let $A(1,j) = i$.   Since $M$ is a matching, each $i$, $0\leq i\leq m-1$ appears in exactly one column, and each column is assigned exactly one element. Therefore, the first row of $A$ is well defined.  Also note that for each column $j$, $A(1,j)$ appears in the $j^{th}$ column of $B$.

Let $B'$ be the matrix defined by taking $B$ and removing one element equal to $A(1,j)$ from each column $j$.  Then $B'$ is an $n-1\times m$ matrix containing exactly $n-1$ elements equal to $i$ for each $0\leq i\leq m-1$, since the elements removed were one of each value.  The column sums for $B'$ are $g_j - A(1,j)$ for all columns $j$.  By the induction hypothesis, there exists an $n-1\times m$ matrix $A'$ such that $A'$ contains the same elements as $B'$ and the same column sums, but each row of $A'$ contains exactly one element equal to $i$, for $0 \leq i\leq m-1$.  Given that we've already defined the first row of $A$, let the remaining $n-1$ rows be $A'$.  Then $A$ contains the same set of values as $B$, with the same column sums $A(1,j) + (g_j - A(1, j)) = g_j$, and every row of $A$ contains exactly one element equal to $i$, for each $0\leq i\leq m-1$.

Therefore, by induction, the theorem holds for all $n$.
\hfill $\Box$
\end{proof}

If a matrix $B$ exists whose column sums are at most the value of the candidates' gaps, and sum($B[d]$) = $g_d$, then matrix $A$ gives a manipulation, where each row of $A$ defines the vote of one of the manipulators.  Therefore, we can devise algorithms to discover $B$ and be assured that $A$ exists.\\

However, the manipulation problem has two additional differences to bin-packing.  First, the number of objects in each bin must be exactly $n$, while bin-packing has no such constraint.  Secondly, each of our bins has a different maximum capacity $g_i$.  The former constraint has been studied in the multiprocessor scheduling domain, where the problem is to schedule jobs on a set of $n$ processors such that the memory resources are never exceeded and the time to complete all jobs is minimized~\cite{krauseJACM75}.  Our problem corresponds to the case where each job takes a unit of processing time.   For each element $a\in S_n$, there is a job with memory requirement equal to $a$.  The number of processors is equal to the number of manipulators $n$, and the amount of available memory resource at time step $i$ is equal to $g_i$.  We wish to find a schedule that uses $m-1$ time steps, which will be possible if an $n$-manipulation exists.  Krause et al. consider the case where the memory resource remains constant over time, and present theoretical analysis of a simple scheduling algorithm that assigns the jobs one at a time to particular time steps.  Their scheduler takes the unassigned job with largest memory requirements and assigns it to a time step (with at least one processor free), that has the maximum remaining available memory.  If no time step exists that can accommodate this job, a new time step is added.\\

Our first greedy algorithm is based on this same intuition, where it translates to giving the largest scores to the competing candidates that have the least score so far.  In this it is similar to $\zuck$, but we are now free to pursue this heuristic strictly, while $\zuck$ for example decides which candidate the second voter's $m-2$ should be assigned to \emph{after} the smaller scores of the first manipulator are assigned.  This can sometimes be an advantage, but it may also lead the algorithm to make more serious mistakes, as we will show.

\subsection{Largest Score in Largest Gap}
Our first greedy algorithm, $\lslg$ is shown in Figure~\ref{fig:lslg}.  $\lslg$ takes the number of manipulators as an argument and returns the matrix $B$ (from Theorem~\ref{thm:1}) if it is able to find an $n$-manipulation.  On line 1, the matrix $B$ (represented as an array of vectors) is initialized so that every column vector is empty.  On line 2, the column corresponding to the distinguished candidate is filled with the maximum value, $m-1$.  On line 3, the array $S$ is initialized with the sorted elements of $S_n$ defined in Section~\ref{sec:background}.  Each iteration of the $\keywordfont{while}$ loop on lines 4-7 removes the first (largest) element of $S$ and pushes it (on line 6) into the column of $B$ that has the lowest sum so far.  Note that we use the notation $|B(i)|$ to denote the current number of elements in the $i^{th}$ column of $B$. Once all elements of $S$ have been assigned, the loop terminates and line 8 checks if a valid manipulation has been produced.  If so, $B$ is returned, and if not, the algorithm reports Failure.

\begin{figure*}
\begin{program}
LSLG(V,n,d)
   // B[i] is the \(i^{th}\) column of B 
1. B[i] \(\leftarrow \emptyset\) for all 1\(\leq\)i\(\leq\)m
   // B[d] is filled with n m-1's
2. B[d] \(\leftarrow\) \(\{\)m-1,...,m-1\(\}\)
   // Each score is repeated n times in S
3. S \(\leftarrow\) \(\{\)m-2,...,m-2,m-3,...,m-3,...,1,...,1,0,...,0\(\}\)
4. \xwhile S \(\neq \{\}\)
     // The column of B that contains fewer than n elements, 
     // with the lowest sum
5.   c \(\leftarrow\) argmin\(_\)i\(\{\)sum(B[i]) + s(V,i) : |B[i]| < n\(\}\)
6.   B[c].push(S[0])
7.   S \(\leftarrow\) S - S[0]
8. \xif sum(B[d]) + s(V,d) \(\geq\) max\(_\)i\(\{\)sum(B[i]) + s(V,i)\(\}\)
9.   return B
10.\xelse
11.  return Failure
\end{program}
\caption{\label{fig:lslg} The greedy algorithm based on placing the largest remaining score in the column of A with the most room.}
\end{figure*}
\comment{used to be called fig:alg2}

The following proposition shows that this algorithm can sometimes find an optimal manipulation when $\zuck$ fails, and this is true for an infinite family of instances.
\begin{proposition}
\label{prop:1}
Let $E = (V, m)$ be an election such that $m >2$ is even, $d=m$, $s(V,d) = 0$ and $s(V, i) = \frac{m}{2}+i$ for all $i \neq d$.  Then $\lslg$ finds an optimal 2-manipulation, but $\zuck$ produces a 3-manipulation.
\end{proposition}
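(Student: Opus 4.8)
The plan is to (i) check that a $1$-manipulation is impossible, so that $2$ is a lower bound on the optimum; (ii) trace $\lslg$ with $n=2$ and show it succeeds, which by Theorem~\ref{thm:1} produces an optimal $2$-manipulation; and (iii) trace $\zuck$ and show it is forced to use a third manipulator.

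First I would record the scores and gaps. The competing candidates are $1,\dots,m-1$ with $s(V,i)=\tfrac m2+i$, so $g_{E,n}(i)=n(m-1)-\tfrac m2-i$. For $n=1$ we get $g_{E,1}(m-1)=(m-1)-\tfrac m2-(m-1)=-\tfrac m2<0$, so by the remark following Definition~\ref{def:gap} there is no $1$-manipulation and the optimum is at least $2$. For $n=2$ we get $g_{E,2}(i)=\tfrac{3m}2-2-i$, and $\sum_{i=1}^{m-1}g_{E,2}(i)=(m-1)(m-2)$, which is exactly the threshold $\tfrac n2(m-1)(m-2)$ of Observation~\ref{obs:1}; hence every competing column of any $2$-manipulation matrix must be saturated.

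Next I would run $\lslg$ on this instance with $n=2$. The key point is that the initial effective sums $s(V,i)=\tfrac m2+i$ of the competing columns form an arithmetic progression of common difference $1$, as do the values $m-2,m-3,\dots,0$ a single manipulator contributes. Following the rule ``largest remaining value into the column of greatest remaining room'' one checks that $\lslg$ puts the two copies of $m-1-k$ into columns $2k-1$ and $2k$ and then fits the smaller values around them so that column $i$ ends up with sum exactly $g_{E,2}(i)$, and that no partial column sum ever exceeds its gap along the way. Then the test on line~8 passes, and Theorem~\ref{thm:1} converts the returned matrix into a genuine $2$-manipulation, optimal by step (i). The step needing care is the tie-breaking in the \texttt{argmin} on line~5: I would either fix a tie-break rule or verify that every tie that arises is between precisely the two columns meant to receive the current value, so that the outcome --- and in particular the invariant that no column ever exceeds $s(V,d)+2(m-1)=2m-2$ --- is independent of the choice. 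This is the main obstacle.

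Finally I would trace $\zuck$. After the first manipulator, candidate $i$ sits at $\tfrac m2+i+(m-1-i)=\tfrac{3m}2-1$, so all competing candidates are tied while $d$ has only $m-1<\tfrac{3m}2-1$, and the loop continues. The second manipulator spreads the competing scores over $\{\tfrac{3m}2-1+j:0\le j\le m-2\}$, whose maximum $\tfrac{5m}2-3$ still exceeds $s(d)=2m-2$ since $m>2$, so a third manipulator is forced. The third manipulator again cancels the spread, leaving every competing candidate at $\tfrac{5m}2-3\le 3m-3=s(d)$, so the loop halts and $\zuck$ outputs a $3$-manipulation. The arithmetic for the first and last steps is routine, and combining the three parts gives the claim.
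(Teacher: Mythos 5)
Your decomposition matches the paper's — trace both algorithms on the given score profile — and two of your three parts are complete and correct. Part (i), the impossibility of a $1$-manipulation via $g_{E,1}(m-1)=-\tfrac m2<0$, is a lower bound the paper leaves implicit, so including it is a genuine improvement. Your $\zuck$ trace in part (iii) agrees with the paper's arithmetic: all competing candidates tie at $\tfrac{3m}{2}-1$ after the first manipulator, the maximum after the second is $\tfrac{5m}{2}-3>2m-2$ for $m>2$, and the third restores the tie at $\tfrac{5m}{2}-3\leq 3m-3$. (The paper also exhibits two votes $\sigma,\sigma'$ realising the score profile, which you omit; that is minor since the proposition posits such an election.)

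The gap is in part (ii), which is the substantive half of the proposition: the claim that $\lslg$ actually succeeds is left at ``one checks that\dots'', and you yourself flag the tie-breaking in the \texttt{argmin} as ``the main obstacle'' without resolving it. As written this is an outline, not a proof. To close it you need the explicit trace the paper supplies: in the first $m-1$ iterations the $k$-th largest element of $S_2$, namely $m-2-\lfloor(k-1)/2\rfloor$, goes into column $k$ — so your pairing ``two copies of $m-1-k$ into columns $2k-1$ and $2k$'' is right only up to the wrap-around, since $m-1$ is odd and column $m-1$ receives just one copy of $\tfrac m2-1$ in the first pass; in the second $m-1$ iterations column $i$ receives $\tfrac m2-1-\lceil(i-1)/2\rceil$; and every column sum then lands exactly on $2(m-1)$, consistent with your (nice) observation that $\sum_i g_{E,2}(i)$ meets the bound of Observation~\ref{obs:1} with equality, so saturation is forced. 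Your worry about ties is legitimate — already at iteration $m-1$ columns $1$ and $m-1$ are both at $\tfrac{3m}{2}-1$, and the paper silently assumes the resolution that favours the emptier column — so you must either fix a tie-break (e.g.\ prefer the column with fewer elements, which reproduces the paper's trace) and verify the column-sum formulas under it, or show the final sums are independent of how such ties are resolved. One of these computations has to actually be carried out before part (ii), and hence the proposition, is established.
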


\begin{proof}
\end{proof}
First, note that two non-manipulator votes are always sufficient to create such an election.  Let $\sigma = <1,2,...,m-1>$ and let
\begin{displaymath}
\sigma' = <\frac{m}{2}+1, \frac{m}{2}+2,...,\frac{m}{2}+\frac{m}{2}-1,1,2,...,\frac{m}{2}>
\end{displaymath}

\noindent Then $\sigma + \sigma' = $

$$
<\left(1 + \frac{m}{2}+1\right), \left(2 + \frac{m}{2}+2\right),...,\left(\frac{m}{2}-1 + \frac{m}{2}+\frac{m}{2}-1\right), \left(\frac{m}{2}+1\right),...,\left(m-1 + \frac{m}{2}\right) >
$$

\noindent which gives us $\frac{m}{2} + 2x$ for $1\leq x \leq \frac{m}{2}-1$ and $\frac{m}{2}+2x-1$ for $1\leq x \leq \frac{m}{2}$, or in other words, $\frac{m}{2}+i$ for all $1\leq i \leq m-1$ (i.e. all $i \neq d$).

The first vote generated by $\zuck$ is $r_1 = m > 1 > 2 >...>m-1$, after which $s(V\cup \{r_1\},i) = \frac{m}{2}+m-1$ for all competing candidates, which is larger than the score of the distinguished candidate $s(V\cup \{r_1\},m) = m-1$.  Therefore another manipulator is added, without loss of generality its vote is $r_2 = m > 1 > 2 > ....m-1$.  The resulting scores of the competing candidates are $s(V\cup\{r_1,r_2 \},i) = \frac{m}{2}+ (m-1) + (m-i-1) = (5/2)m -2 -i$.  So candidate $i = 1$ still has larger score than $s(V\cup\{r_1,r_2 \},m) = 2m-2$.  Therefore, $\zuck$ does not find a 2-manipulation.

\begin{wrapfigure}{r}{0.5\textwidth}
 \centering
 Ê Ê\includegraphics[width=0.5\textwidth]{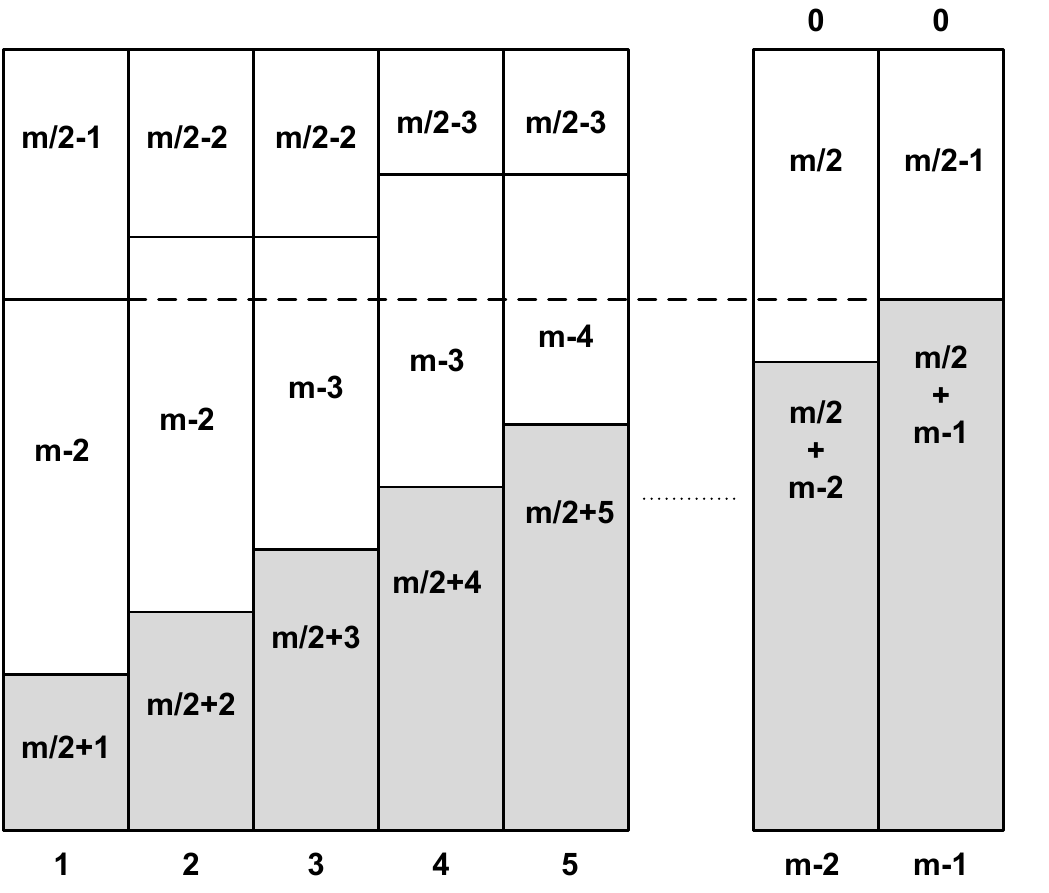}
 Ê Ê\caption{\label{fig:diagram} The 2-manipulation generated by $\lslg$ for the election in Proposition~\ref{prop:1} 
 Ê Ê}
\end{wrapfigure}
The first $m-1$ iterations of $\lslg$ will place the $k^{th}$ largest score from $S_2$ into the $k^{th}$ column of matrix $B$ for $1\leq k \leq m-1$. Note that the $k^{th}$ largest score is $m-2-\lfloor(k-1)/2\rfloor$.  Let $B_{m-1}$ be the matrix at this point.  Then $sum(B_{m-1}(i)) + s(V,i) = (m-2-\lfloor(i-1)/2\rfloor) + \frac{m}{2}+i$ for all $i < m$.  The next $m-1$ iterations of $\lslg$ will place the $k^{th}$ largest score from $S_2$ into the $k^{th}$ column of matrix $B$ for $m\leq k \leq 2(m-1)$. So column $i  <m$ will receive the element $\frac{m}{2}-1-\lceil(i-1)/2 \rceil$. Let $B_{2(m-1)}$ be the matrix when the loop terminates. Then $sum(B_{2(m-1)}(i)) + s(V,i) = (m-2-\lfloor(i-1)/2\rfloor) + (\frac{m}{2}+i) + (\frac{m}{2}-1-\lceil(i-1)/2 \rceil) = 2(m-1)$ for all $i < m$, while the achievable score of $m$ is also $2(m-1)$.  Therefore, $\lslg$ does find a 2-manipulation.  Figure~\ref{fig:diagram} shows the matrix generated by $\lslg$ (column $d=m$ is omitted), where the shaded areas represents the scores $s(V, i)$ for each $i < m$.

\hfill $\Box$

Unfortunately, $\lslg$ does not share the guarantee of $\zuck$ that in the worst case it requires one extra manipulator than is optimal.  In fact, Theorem~\ref{thm:alg2} shows that the number of extra manipulators $\lslg$ might require is unbounded.

\begin{theorem}
\label{thm:alg2}
Let $k$ be positive integer greater than zero and divisible by 36.  Let $s(V, 1) = 6k$, $s(V, 2) = 4k$, $s(V, 3) = 2k$, $s(V, 4) = 0$ be the scores of four candidates after some non-manipulators $V$ vote, and let $d=4$.  Then $\zuck$ will find the optimal manipulation, using $2k$ manipulators.  However, $\lslg$ requires at least $2k + k/9 - 3$ manipulators.
\end{theorem}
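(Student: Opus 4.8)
The plan has three stages: pin down the optimum and verify \zuck\ attains it, trace \lslg\ on this family, and extract the lower bound by counting. Here $m=4$, so each manipulator gives $m-1=3$ to $d=4$ and the multiset $\{0,1,2\}$ to candidates $1,2,3$; with $n$ manipulators the gaps (Definition~\ref{def:gap}) are $g_1=3n-6k$, $g_2=3n-4k$, $g_3=3n-2k$, and non-negativity of $g_1$ (equivalently Observation~\ref{obs:1}) forces $n\ge 2k$. For $n=2k$ the gaps are $0,2k,4k$, summing to exactly $3n=\sum_{a\in S_n}a$, so in any $2k$-manipulation matrix the three competing columns sum to exactly their gaps; the only consistent assignment puts all zeros in column $1$, all ones in column $2$, all twos in column $3$, and since column $d$ must then sum to $6k=2k\cdot 3$ with entries at most $m-1=3$ it is all threes, so by Theorem~\ref{thm:1} a $2k$-manipulation exists. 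Hence the optimum is $2k$ (and the scores $(6k,4k,2k,0)$ themselves arise from $2k$ honest votes $1>2>3>4$). For \zuck: while fewer than $2k$ of its manipulators have voted the scores after $t$ rounds are $(6k,\,4k+t,\,2k+2t,\,3t)$, whose competing part is ordered $3<2<1$, so \zuck\ casts $4>3>2>1$ each round; after exactly $2k$ rounds all four scores equal $6k$ and it halts.

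Next I would run \lslg\ with $n=2k+\delta$ manipulators, keeping $0\le\delta<k/9$ as a parameter, and write $V_i=sum(B[i])+s(V,i)$ for the running value of column $i$: initially $V_1=6k$, $V_2=4k$, $V_3=2k$; column $d$ stays forever at $3n=6k+3\delta$; \lslg\ succeeds iff every $V_i\le 3n$ at the end. \lslg\ inserts the $n$ twos, then the $n$ ones, then the $n$ (inert) zeros, each into the non-full column of least $V_i$. Throughout the twos phase column $1$ is strictly the largest of $1,2,3$, so it gets nothing until the tail. The twos phase then decomposes: the first $k$ twos all go to column $3$, lifting it from $2k$ to $4k=V_2$; the remaining $k+\delta$ twos alternate between the tied columns $2$ and $3$, so column $3$ picks up at most $\lceil(k+\delta)/2\rceil$ of them and leaves the phase with $V_3\le 5k+\delta+O(1)$ and at most $\lceil(k+\delta)/2\rceil$ of its $n$ slots still empty. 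In the ones phase columns $2$ and $3$ are again within $O(1)$ of each other, so column $3$ absorbs at most its remaining slots as ones and reaches capacity $n$ before the ones run out, after which it is frozen; thus $V_3\le 5k+\delta+\lceil(k+\delta)/2\rceil+O(1)=(11k+3\delta)/2+O(1)$ at the end.

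For the counting step, \lslg\ never touches the already-full column $d$, so all of $S_n$ lands in columns $1,2,3$ and therefore $V_1+V_2+V_3=\sum_{a\in S_n}a+s(V,1)+s(V,2)+s(V,3)=3n+12k$ when the run finishes. With the bound on $V_3$ this gives $V_1+V_2\ge 12k+3n-(11k+3\delta)/2-O(1)=(25k+3\delta)/2-O(1)$, hence $\max(V_1,V_2)\ge(25k+3\delta)/4-O(1)$, and this exceeds $3n=6k+3\delta$ as soon as $25k+3\delta>24k+12\delta+O(1)$, i.e.\ whenever $9\delta<k-O(1)$. So \lslg\ returns Failure for every $n=2k+\delta$ with $\delta$ a bounded amount below $k/9$, and trivially for $n<2k$; therefore \lslg\ requires at least $2k+k/9-3$ manipulators, the additive $3$ comfortably covering the slack from tie-breaking and the floors above. (The divisibility of $k$ by $36$ just keeps $k/9$ and the intermediate fractions integral, which is why that value appears.)

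The main obstacle is the second stage: one must justify the ``alternating'' descriptions of the two balanced sub-phases — where \lslg\ repeatedly augments whichever of two near-equal columns is currently smaller — and, crucially, keep track of the capacity constraint $|B(i)|<n$, since it is exactly because column $3$ is overloaded with twos that it fills up while still well below $3n$, and that wasted capacity is what forces columns $1$ and $2$ past $3n$. One also has to verify that columns $2$ and $3$ never overtake column $1$'s value $6k$ before column $3$ freezes, which keeps column $1$ idle and legitimises the clean phase-by-phase decomposition.
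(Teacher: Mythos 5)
Your argument is correct in substance and follows the same skeleton as the paper's proof: both establish that the optimum is $2k$ (forced by the gaps, attained by $2k$ copies of $4>3>2>1$), observe that \zuck\ casts exactly that vote every round, and then trace \lslg\ through the same phases --- the first $k$ twos into column~3, the remaining twos alternating between the tied columns 2 and 3, the ones likewise until column~3 fills. Where you genuinely differ is in how you close. The paper fixes the single value $n = 2k + k/9 - 4$, computes every intermediate quantity exactly, and exhibits $\mathrm{sum}(B[2])+s(V,2) = 19k/3 - 3 > 19k/3 - 12 = 3n$. You instead keep $n = 2k+\delta$ as a parameter and finish with a conservation argument: since column $d$ is full from the start, all of $S_n$ lands in columns 1--3, so $V_1+V_2+V_3 = 12k+3n$, and your upper bound on $V_3$ forces $\max(V_1,V_2)>3n$ by averaging. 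This buys two things: it handles every $n$ in $[2k,\,2k+k/9)$ at once, whereas the paper's trace, read literally, certifies failure only at $n=2k+k/9-4$; and it sidesteps tracking which of columns 1 and 2 ends up largest --- harmlessly, since at the critical $n$ they tie at $19k/3-3$, so the averaging bound is actually tight. The one loose end is your unresolved $O(1)$ slack from tie-breaking and ceilings, which you assert but do not verify is absorbed by the $-3$. It is: carrying the constants through gives $V_3 \le (11k+3\delta+1)/2$ and hence failure whenever $9\delta < k-1$, i.e.\ for all $\delta \le k/9-1$, which comfortably covers (indeed slightly improves on) the stated bound; but as written that bookkeeping is the step you would still need to make precise.
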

\begin{proof}
First, we should mention that for any $k$ there is a set of votes $V_k$ that gives the specified scores to the four candidates:  $V_k$ is simply $2k$ votes, all equal to $1 > 2 > 3 > 4$.  $\zuck$ will use $2k$ manipulators, all voting $4 > 3 > 2 > 1$, to achieve a score of $6k$ for all candidates (the only optimal manipulation). It remains to argue that $\lslg$ requires more than $2k + k/9 - 4$ manipulators.  Assume for contradiction that we find a manipulation using $n = 2k + k/9 -4 = 19k/9 -4$ manipulators.  We will follow the execution of $\lslg$ until a contradiction is obtained.   Note that given our definition of $n$, since $k$ is divisible by 4 and 9, $\frac{n-k}{2}$ is an integer.

First, the algorithm will place $k$ 2's in B[3], at which point $sum(B[3]) = 2k + 2k = 4k = s(V_k, 2)$.  Then it will begin to place 2's in columns B[2] and B[3] evenly, until all remaining $n-k$ 2's have been placed into B.  At this point, B[2] contains $\frac{n-k}{2}$ 2's, and the number of 2's that B[3] contains is $k + \frac{n-k}{2} = k/2 + n/2 = k/2 + (19k/9 - 4)/2 = 14k/9 - 2 < 19k/9 - 4 = n$. So at this point, B[3] is not full yet and B[2] isn't either (it has fewer elements than B[3]).  Both columns sum to $4k + 2(\frac{n-k}{2}) = 46k/9 - 4 = 5k + k/9 - 4 < 6k$.  Therefore, the algorithm will start putting 1's in both B[2] and B[3] evenly, until either their column sums reach $6k$ or B[3] gets filled.  In fact, B[3] will be filled before its sum reaches $6k$, since B[3] requires $\frac{n-k}{2}$ more elements to be filled, but at this point, $sum(B[2])=sum(B[3]) = 46k/9 - 4 + \frac{n-k}{2} = 51k/9 - 6 = 5k + 2k/3 - 6 < 6k$.  

Now, the algorithm will continue by putting $k/3 + 6$ 1's into B[2], at which point $sum(B[2]) = 51k/9 - 6 + k/3 + 6 = 6k$. Then the algorithm will start putting 1's evenly in both B[1] and B[2], until either it runs out of 1's or B[2] is filled.  In fact, the 1's will run out before B[2] is filled, since B[2] requires $n - (\frac{n-k}{2} + \frac{n-k}{2} + k/3 + 6) = 2k/3 - 6$ more elements, which is equal to the number of remaining 1's, but these are spread between B[1] and B[2].  So B[2] will get $(2k/3 - 6)/2 = k/3 - 3$ additional 1's, for a total of $sum(B[2]) = 4k + 2(\frac{n-k}{2}) + \frac{n-k}{2} + k/3 + 6 + k/3 -3 = 19k/3 - 3 > 19k/3 -12 = 3n$.  Since $sum(B[2]) > 3n$ there is no manipulation using $n = 19k/9 - 4$ manipulators.  Therefore, $\lslg$ requires at least $n+1 = 2k + k/9 -3$ manipulators.
\hfill $\Box$
\end{proof}

This result shows the weakness of $\lslg$, that it only considers the relative sizes of the competing candidates' current scores.  Therefore 
if two candidates' column sums ever become equal during $\lslg$, they will often be treated equivalently for the remainder of the iterations.  In the example from Theorem~\ref{thm:alg2}, this is the fatal mistake, since at the point where $sum(B[3])$ becomes equal to  $sum(B[2])$, column $3$ requires fewer additional elements before it is filled (i.e. $|B[2]| < |B[3]|$).  Therefore, it is important for column 3 to receive larger elements than column 2.  In fact, all of the largest elements must be taken by column $3$, and none given to column $2$.  However, $\lslg$ will begin treating the two equal columns the same, distributing the remaining 2's evenly between $B[2]$ and $B[3]$.  This observation motivates our second greedy algorihthm.

\subsection{Average Desired Score}
\begin{figure*}
\begin{program}
LSLA(V,n)
1. B[i] \(\leftarrow \emptyset\) for all 1\(\leq\)i\(\leq\)m
   // B[d] is filled with n m-1's
2. B[d] \(\leftarrow\) \(\{\)m-1,...,m-1\(\}\)
   // Each score is repeated n times in S
3. S \(\leftarrow\) \(\{\)m-2,...,m-2,m-3,...,m-3,...,1,...,1,0,...,0\(\}\)
4. \xwhile S \(\neq \{\}\)
     // The column of B with highest average desired score
5.   c \(\leftarrow\) argmax\(_\)i\(\{\) [g_i-sum(B[i])] / [n-|B[i]|]] : |B[i]| < n\(\}\)
6.   s \(\leftarrow\) chooseScore(g_c-sum(B[c]), S)
7.   B[c].push(s)
8.   S \(\leftarrow\) S - \(\{\)s\(\}\)
9. \xif sum(B[d]) + s(V,d) \(\geq\) max\(_\)i\(\{\)sum(B[i]) + s(V,i)\(\}\)
10.  return B
11.\xelse
12.  return Failure
\end{program}

\begin{program}
chooseScore(g,S)
1. s \(\leftarrow\) max\(\{\)s \(\in\) S : s \(\leq\) g\(\}\)
2. \xif s = None
3.    s = S[0]
4. return s  
\end{program}
\caption{\label{fig:avgalg} The greedy algorithm based on average desired score, for n manipulators.}
\end{figure*}

The second greedy algorithm is based on the idea that it is not enough to simply assign the largest scores to the columns of $B$ that have the largest gap.  Each column of $B$ also requires exactly $n$ elements in order to be filled, where $n$ is the number of manipulators currently attempted. To balance these two requirements, we can look at the \emph{remaining gap} $g_i - sum(B[i])$ and divide it by the remaining number of scores that must be added to column $i$, $n - |B[i]|$.  Notice that if we had $n - |B[i]|$ scores of this average size available (for each $i$), we could fill every column of $B$ perfectly.  Since we don't, a sensible heuristic is to put the largest scores in the columns that have largest average desired score.   This algorithm, called $\lsla$, is shown in Figure~\ref{fig:avgalg}. \\

The structure of $\lsla$ is similar to $\lslg$, so it will not be explained line by line.  Note that on line 5 of $\lsla$ we need some way to break ties between candidates that have the same average desired score.  We could break ties arbitrarily, but we also consider choosing the candidate $i$ with minimum $|B[i]|$ since this column needs more additional scores.  We found experimentally that the latter tie breaking policy works better overall, although there are some instances where only the arbitrary policy finds the optimal manipulation. The procedure $\texttt{chooseScore}$ is used to avoid violating the maximum column sum $g_i$ earlier than necessary. Given an array of unassigned scores and the size of a column's remaining gap, it returns the largest unassigned score that fits in the remaining gap. We found experimentally that this was vital to finding the optimal manipulation in the majority of cases.

We now compare $\lsla$ to the other two greedy algorithms. $\lsla$ behaves similarly to $\zuck$ on the instances from Theorem~\ref{thm:alg2}, and thus it performs better than $\lslg$ on an infinite family of instances.  In fact, in the next section we will see that we have never found an instance for which $\zuck$ can find an optimal manipulation but $\lsla$ fails.  However, cases do exist where the simpler greedy algorithm $\lslg$ finds the optimal manipulation and $\lsla$ fails.  Two examples are shown in Figure~\ref{fig:exavgfails}, but analysis of these cases has failed to produce a generalizable pattern.  In the next section we provide further experimental evidence of the superiority of $\lsla$ compared to the other two algorithms.

\begin{figure*}
\centering{
\begin{tabular}{r|cccccccc}
$j$ & 1 & 2 & 3 & 4 & 5 & 6 & 7 & 8\\
\hline
$S(V_1, j)$ & 67& 60& 59& 58& 58& 52& 52& 42\\
$S(V_2, j)$ & 41& 34& 30& 27& 27& 26& 25& 14\\
\end{tabular}}
\caption{ \label{fig:exavgfails} Examples where $\lslg$ beats $\lsla$ by finding the optimal number of manipulators vs. using one extra.}
\end{figure*}

\section{Empirical Comparison}
\label{sec:exper}
In this section we compare the performance of $\zuck$, $\lslg$ and $\lsla$ from a practical perspective.  Our experimental setup is similar to that of Walsh~\cite{walshECAI10}.  We consider two methods of generating non-manipulator votes, the uniform random votes model and the Polya Eggenberger urn model~\cite{bergPC85}.  In the uniform random votes model, each vote is drawn uniformly at random from all $m!$ possible votes.  In the urn model, votes are drawn from an urn at random, but we place them back into the urn along with $a$ other votes of the same type.  This model attempts to capture varying degrees of social homogeneity, or the similarity between voters' preferences. We set $a = m!$, which means that there is a 50\% chance that the second vote is the same as the first.  It would be interesting to consider varying the degree of vote similarity by experimenting with different values of $a$.  In future work we also intend to study votes generated from real-world elections, e.g.~\cite{dobra83}. We generated election instances for numbers of candidates $m$ and numbers of non-manipulators $p$ in $\{2^2,...,2^7\}$.  We generated 1000 instances for each pair $(m, p)$.  Since the votes were generated randomly, for small numbers of candidates some duplicate instances were produced.  The total number of distinct Uniform elections obtained was 32679, and the number of distinct Urn elections was 31530.\\

In order to determine the optimal number of manipulators exactly, we modeled the manipulation problem as a constraint satisfaction problem (CSP).  The model we used comes directly from the definition of the manipulation matrix $A$, Definition~\ref{def:matrix}.  In this model, there are $n\times m-1$ finite domain variables, with domains equal to $\{0,...,m-2\}$ that represent the unknown elements of $A$.  There are $n$ ALLDIFF constraints, each over the variables of a row, that ensure each vote is properly formed. $m-1$ constraints over the variables of each column $i$ of $A$ ensure that their sum is at most $g_i$.  Finally, if $g_i = g_j$ for any two columns $i < j$, we added a constraint that $A[i][0] < A[j][0]$ over their row-1 elements.  This breaks the symmetry between the two columns and reduces the number of equivalent solutions to the model. We used the solver Gecode~\cite{gecode} to find a solution to the CSP, using Domain Over Weighted Degree as the variable ordering heuristic.  The timeout for Gecode was set to one hour, and all experiments were performed on processors of typical contemporary performance.\\

We will refer to the number of manipulators used by $\zuck$ as $N_r$. We ran the three competing greedy algorithms, and if this did not determine the optimal manipulation (i.e. none did better than $\zuck$), we checked whether Observation~\ref{obs:1} or the fact that $g_{E,N_r-1}(i)$ is negative for some candidate $i$ allow us to conclude that a $(N_r - 1)$-manipulation is impossible.  If the optimal number of manipulators was still unknown, we attempted to find an $(N_r-1)$-manipulation using Gecode. \\

\begin{figure*}
\centering{
\begin{tabular}{rr|cccc}
 $m$  &  \# Inst.  & $\zuck$ & $\lslg$  &  $\lsla$ & $\lslg$ beat $\lsla$\\ 
\hline
4 & 2771 & 2611 & 2573 & 2771 & 0 \\
8 & 5893 & 5040 & 5171 & 5852 & 2\\
16 & 5966 & 4579 & 4889 & 5883 & 3\\
32 & 5968 & 4243 & 4817 & 5879 & 1\\
64 &  5962 & 3980 & 4772 & 5864 & 3\\
128 & 5942 & 3897 & 4747 & 5821 & 2\\
\hline
Total & 32502 & 24350 & 26969 & 32070 & 11 \\
\% & & 75 & 83 & 99 & $<$1
\end{tabular}}
\caption{ \label{fig:table1} Number of Uniform elections for which each algorithm found an optimal manipulation.}
\end{figure*}

\begin{figure*}
\centering{
\begin{tabular}{rr|cccc}
 $m$  &  \# Inst.  & $\zuck$ & $\lslg$  &  $\lsla$ & $\lslg$ beat $\lsla$\\ 
\hline
4 & 3929 & 3666 & 2604 & 3929 & 0 \\
8 & 5501 & 4709 & 2755 & 5496 & 0\\
16 & 5502 & 4357 & 2264 & 5477 & 1\\
32 & 5532 & 4004 & 2008 & 5504 & 0\\
64 &  5494 & 3712 & 1815 & 5475 & 0\\
128 & 5571 & 3593 & 1704 & 5565 & 0\\
\hline
Total & 31529 & 24041 & 13150 & 31446 & 1 \\
\% & & 76 & 42 & 99.7 & $<$1
\end{tabular}}
\caption{ \label{fig:table2} Number of Urn elections for which each algorithm found an optimal manipulation.}
\end{figure*}

\noindent \emph{Uniform Elections} Using the combined method described above, we were able to determine the optimal number of manipulators in 32502 out of the 32679 distinct Uniform elections.  The results are shown in Figure~\ref{fig:table1}, grouped by the number of candidates $m$.  The first column shows the number of candidates, and the second column shows the number of instances for which we report results.  The next three columns show the number of instances for which each of the greedy algorithms could find an optimal manipulation.  The last column shows the number of instances on which $\lslg$ found the optimal solution but $\lsla$ did not. These results show that both $\lslg$ and $\lsla$ provide a significant improvement over $\zuck$, solving 83\% and 99\% of instances to optimality overall. We also notice that $\zuck$ solves fewer problems to optimality as the number of candidates increases, while $\lsla$ does not seem to suffer from this problem as much: $\lsla$ solves 100\% of the $m=4$ instances and 98\% of the 128 candidate elections. In addition to the results in the table, we mention that in every one of the 32502 instances, if $\zuck$ found an $n$-manipulation either $\lsla$ did too, or $\lsla$ found an $(n-1)$-manipulation.\\

\noindent \emph{Urn Elections} We were able to determine the optimal number of manipulators for 31529 out of the 31530 unique Urn elections.  Figure~\ref{fig:table2} presents the results, in the same format as Figure~\ref{fig:table1}.  $\zuck$ solves about the same proportion of the Urn instances as it did of the Uniform instances, 76\%.  However, $\lslg$ performance drops significantly, and is in fact much worse than $\zuck$ at 42\% of instances solved.  This can be explained by the structure of the Urn elections, which contain many identical votes.   This results in a similar pattern of non-manipulator scores to those in Theorem~\ref{thm:alg2} on which $\lslg$ has pathological behavior.  Surprisingly, the good performance of $\lsla$ is maintained.  $\lsla$ found the optimal manipulation on more than 99\% of the instances, dominates $\zuck$ and only lost one instance to $\lslg$ in this set of experiments.\\

\vspace{-0.5cm}
\section{Conclusion}
\label{sec:conclusion}
We studied the coalitional manipulation problem in elections using the
unweighted Borda rule. We provided insight into the structure of the
solutions that allows us to build algorithms that construct a
manipulation in a manner similar to bin-packing rather than
constructing an entire vote at each step. Using this insight, we
proposed two new algorithms, $\lslg$ and $\lsla$. We have provided
no optimality guarantees for these algorithms. In fact, we show that
$\lslg$ may require an unbounded number of additional manipulators
relative to the optimal. However, there are infinite families of instances in which both
algorithms can find the optimal but the algorithm proposed by Zuckerman
et al.~\cite{zuckermanSODA08}, which does have a worst-case guarantee, can not.  In an
empirical evaluation performed over more than 60000 randomly generated
instances, $\lsla$ finds the optimal manipulation in more than 99\% of
the cases, is never outperformed by $\zuck$ and in only 12 instances by
$\lslg$. This result provides further empirical evidence that the
unweighted Borda rule can be manipulated effectively using relatively
simple algorithms.\\

In future work, we intend to determine whether we can provide
theoretical optimality guarantees for $\lsla$ similar to those that are
known for $\zuck$ and theoretically verify the strict dominance that we
observed empirically. Further, we intend to investigate whether we can
extend our algorithms to always find the optimal number of
manipulators for these elections. Another question that arises
from this work is whether similar insights can be developed for other
scoring rules.\\

\noindent {\footnotesize \textbf{Acknowledgments:} Jessica Davies is supported by the National Research Council of Canada. George Katsirelos is supported by the ANR UNLOC project ANR 08-BLAN-0289-01. Nina Narodytska and Toby Walsh are funded by the Australian Government's Department of Broadband, Communications and the Digital
Economy and the Australian Research Council.}

\begin{minipage}[n]{0.5\linewidth}

\begin{contact}
Jessica Davies\\
University of Toronto\\
Toronto, Canada\\
\email{jdavies@cs.toronto.edu}
\end{contact}

\begin{contact}
George Katsirelos\\
Universit\'e Lille-Nord de France\\
CRIL/CNRS UMR8188\\
Lens, F-62307, France\\
\email{gkatsi@gmail.com}
\end{contact}

\end{minipage}
\begin{minipage}[ b]{0.5\linewidth}
\begin{contact}
Nina Narodytska\\
NICTA and UNSW\\
Sydney, Australia\\
\email{ninan@cse.unsw.edu.au}
\end{contact}

\begin{contact}
Toby Walsh\\
NICTA and UNSW\\
Sydney, Australia\\
\email{toby.walsh@nicta.com.au}
\end{contact}
\end{minipage}

\end{document}